\DeclareMathOperator{\ap}{\emph{A}\MRkern \emph{P}}
\newcommand{\MRkern}{%
  \mkern-3mu
  \mathchoice{}{}{\mkern0.2mu}{\mkern0.5mu}%
}
\newtheorem{definition}{\bf{Definition}}
\newtheorem{assumption}{\bf{Assumption}}
\newtheorem{problem}{\bf{Problem}}
\newtheorem{remark}{\bf{Remark}}
\newtheorem{theorem}{\bf{Theorem}}
\newtheorem{lemma}{Lemma}
\newcommand{\norm}[1]{\left\lVert#1\right\rVert}
\newcommand{\FP}{\mathcal{P}}
\newcommand{\FA}{\mathcal{A}}
\title{\LARGE \bf
 Probabilistically Guaranteed Satisfaction of Temporal Logic Constraints During Reinforcement Learning}
\author{Derya Aksaray$^1$, Yasin Yaz{\i}c{\i}o\u{g}lu$^2$, and Ahmet Semi Asarkaya$^1$ 
\thanks{$^1$D. Aksaray and A.S. Asarkaya are with the Department of Aerospace Engineering and Mechanics, University of Minnesota, Minneapolis, MN, 55455, {\tt\small daksaray@umn.edu, asark001@umn.edu}}
\thanks{$^2$Y. Yaz{\i}c{\i}o\u{g}lu is with the Department of Electrical and Computer Engineering, University of Minnesota, Minneapolis, MN, 55455, {\tt\small ayasin@umn.edu}.}
}
\begin{document}
\bibliographystyle{abbrv}
\maketitle

\begin{abstract}
We propose a novel constrained reinforcement learning method \color{black} for finding optimal policies in  Markov Decision Processes while satisfying temporal logic constraints with a desired probability throughout the learning process. 
An automata-theoretic approach is proposed to ensure the probabilistic satisfaction of the constraint in each episode, which is different from penalizing violations to achieve constraint satisfaction after a sufficiently large number of episodes. The proposed approach is based on computing a lower bound on the probability of constraint satisfaction and adjusting the exploration behavior as needed. We present theoretical results on the probabilistic constraint satisfaction achieved by the proposed approach. We also numerically demonstrate the proposed idea in a drone scenario, where  the constraint is to perform periodically arriving pick-up and delivery tasks and the objective is to fly over high-reward zones to simultaneously perform aerial monitoring.

\end{abstract}

\section{Introduction}
Reinforcement learning (RL) has been widely used to learn optimal control policies for Markov Decision Processes (MDPs) through trial and error \cite{sutton2018}. When the system is also subject to constraints, traditional RL algorithms can be used to learn optimal feasible solutions after a sufficiently long amount of training by severely penalizing infeasible trajectories. However, this approach does not provide any formal guarantees on constraint satisfaction during the early stages of the learning process. Hence, this is not a viable approach for many real-life applications where the constraint violations during training may have severe consequences. 

Safe reinforcement learning is the process of learning optimal policies while ensuring a reasonable performance or safety during the learning process \cite{garcia2015}. One prominent way to achieve safe RL is modifying the exploration process so that the system selects actions while avoiding unsafe configurations. For example, prior information or transfer learning ideas can be used to reduce the time spent with random actions during exploration (e.g., \cite{driessens2004,abbeel2005}), which typically do no have theoretical guarantees. Learning can also be achieved over a constrained MDP (e.g., \cite{efroni2020}), where the goal is to maximize the expected sum of reward subject to the expected sum of cost being smaller than a threshold. Such works typically have theoretical guarantees such as bounded regret on the performance and constraint violation. Control-barrier functions or Hamilton-Jacobi methods can also be adopted to stay inside a safe region during learning (e.g., \cite{cheng2019,fisac2018}). In these approaches, safety is mainly understood as visiting ``good states" and avoiding ``bad states". However, complex missions typically involve constraints on not only the current state but also the system's trajectory. For example, suppose that a robot must visit first region A and then region B. Regions A and B may not be categorized as safe or unsafe, but visiting B before visiting A may imply a mission failure. 
Temporal logics (TL) \cite{baier2008} provide a powerful way of describing such complex spatial and temporal specifications. Some studies in the literature address the RL problem under TL constraints. For example, Linear Temporal Logic (LTL) constraints are considered in a model-free learning framework and maximum possible LTL constraint satisfaction is achieved in \cite{hasan2020}. A reactive system called shield is proposed in \cite{alshiekh2018} which corrects the chosen action if it causes the violation of an LTL specification.  

In this paper, we introduce a reinforcement learning algorithm for maximizing the expected sum of rewards subject to the satisfaction of a TL constraint during the learning process with some desired probability. The decision-making of an agent is modeled as an MDP, and the constraint is expressed using a bounded TL which is encoded as a finite state automaton. We construct a time-product MDP and formulate the learning problem over the time-product MDP. We prove that the proposed learning algorithm enables the agent to satisfy the TL constraint in each episode with a probability greater than a desired threshold.



This paper differs from the existing works on RL under TL
constraints (e.g., \cite{hasan2020,alshiekh2018}) as follows: 1) we consider bounded TL constraints with explicit time parameters, which are richer than LTL
constraints, 2) we ensure that the probability of TL
satisfaction in each episode of learning is not below a desired
threshold,  $Pr_{des}$.
Note that this formulation is more flexible than enforcing the maximum probability of satisfaction (e.g., \cite{hasan2020,alshiekh2018}) since it allows the user to tune the performance by selecting $Pr_{des}$ based on the trade-off between risk (constraint violation) and efficiency (reward collection). To the best of our knowledge, this is the first study that addresses a constrained reinforcement learning problem, where the goal is to maximize expected reward while satisfying a bounded temporal logic constraint with a probability greater than a desired threshold throughout the learning process (even in the first episode).

\section{Preliminaries: Temporal Logic}
Temporal logic (TL) is a mathematical formalism to reason about the behavior of a system in terms of time. 
There are various TLs, and one way to categorize them is based on the length of words they can deal with. Let $\ap$ be a set of atomic propositions each of which has a truth value. For example, let $A \in \ap$ be an atomic proposition. One can say $A$ is true if a drone is monitoring region $A$ and it is false otherwise. A word is a sequence of elements from $\ap$. In this regard, there are TLs like Linear Temporal Logic (LTL) that can deal with words of infinite length. LTL is extensively used in various domains and has efficient off-the-shelf tools for verification and control synthesis (e.g., \cite{gerth1995,kress2009}). While LTL can express a specification such as ``eventually visit region A", it cannot capture temporal properties with explicit time constraints (e.g., ``eventually visit region A in 10 minutes").  Temporal logics such as Bounded Linear Temporal Logic (\cite{ishii2015,tkachev2013}), Interval Temporal Logic \cite{cau1997}, and Time Window Temporal Logic \cite{twtl} deal with words of finite length to overcome this limitation. For most of the bounded TLs, satisfactory cases are called accepting words. 
The set of all accepting words is called the accepting language of the TL, which can be represented as a deterministic finite state automaton (FSA) $\FA = (Q, q_{init}, 2^{\ap}, \delta, F_{\FA})$ where $Q$ is a finite set of states, $q_{init} \in Q$ is the initial state, $2^{\ap}$ is the input alphabet, $\delta : Q \times 2^{\ap} \rightarrow Q$ is the transition function, $F_{\FA} \subseteq Q$ is the set of accepting states.  
\begin{figure}[htb!]
\begin{center}
\includegraphics[width=0.85\columnwidth]{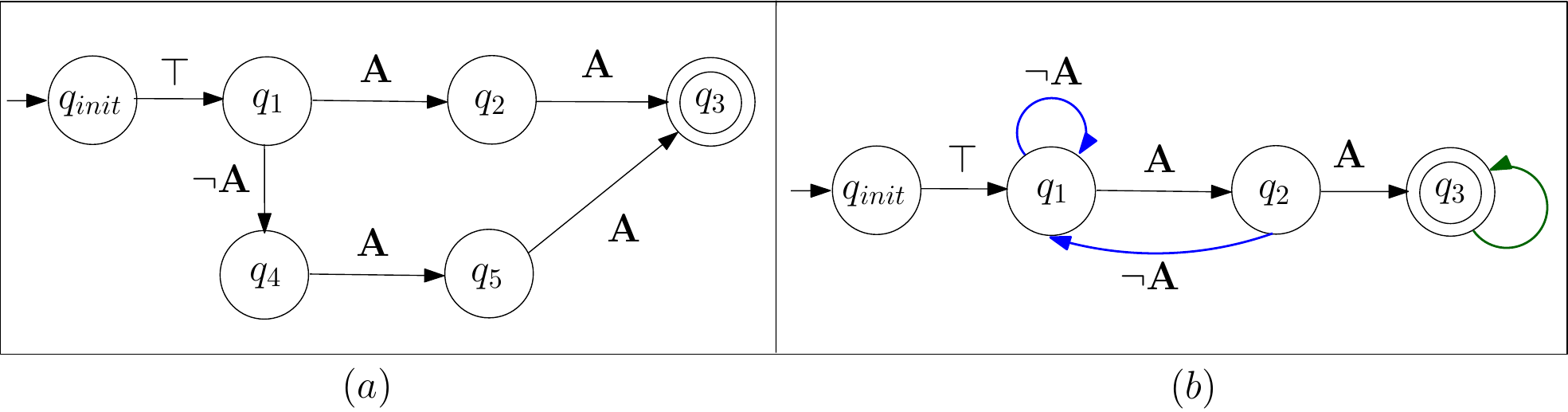}%
\caption{(a) The FSA of ``eventually from time 1 to 3, visit A for 2 consecutive time steps". $Q=\{q_{init},q_0,q_1,q_2,q_3,q_4,q_5\}$, $\ap=\{A,\neg A\}$ where $\neg A$ refers to the negation of A, $\top$ is the true constant, $\delta(q_{init},\top) = q_1$, $\delta(q_{1},A) = q_2$, $\delta(q_{1},\neg A) = q_4$, $\delta(q_4,A)=q_5$, $\delta(q_2,A) = \delta(q_5,A) = q_3$, $F_{\FA} = \{q_3\}$. Note that each path that starts from $q_{init}$ and ends at $q_3$ represents an accepting word. (b) The modified FSA of ``eventually from time 1 to 3+$\tau$, visit A for 2 consecutive time steps" for all possible $\tau$.}
\label{fig:fsa}
\end{center}
\end{figure}

Temporal relaxation can be defined for any bounded TL specification with explicit time parameters. For instance, consider a specification as ``eventually from time 1 to 3, visit region A for 2 consecutive time steps". Its temporally relaxed version will be ``eventually from time 1 to $3+\tau$, visit region A for 2 consecutive time steps" where $\tau$ is a slack variable that can expand or shrink the time window. Such a temporal relaxation idea has been introduced for TWTL in \cite{twtl,aksaray2016dynamic}, and the authors also propose an algorithm that constructs an FSA encoding all the possible temporal relaxations of a formula. For example, Fig.~\ref{fig:fsa} illustrates the FSAs of a bounded TL formula and its temporally relaxed version. 

\color{black}
\section{Problem Statement}

\subsection{Motivation}


We consider an MDP, $M=(S,A,\Delta_M,R)$, where $S$ is the state-space, $A$ is the set of actions, $\Delta_M:S \times A \times S \rightarrow [0,1]$ is a probabilistic transition relation, and $R: S \times A \rightarrow \mathbb{R}$ is a reward function. Moreover, let $\ap$ be a set of atomic propositions, each of which has a truth value over the state-space. Let ${l:S \rightarrow 2^{\ap}}$ be a labeling function, which maps every ${s \in S}$ to the set of atomic propositions that are true when the system is in state $s$. An example of an MDP, a set of atomic propositions, and a labeling function is shown in Fig.~\ref{fig:mdp}.

\begin{figure}[htb!]
\begin{center}
\resizebox*{0.6\columnwidth}{!}{\includegraphics{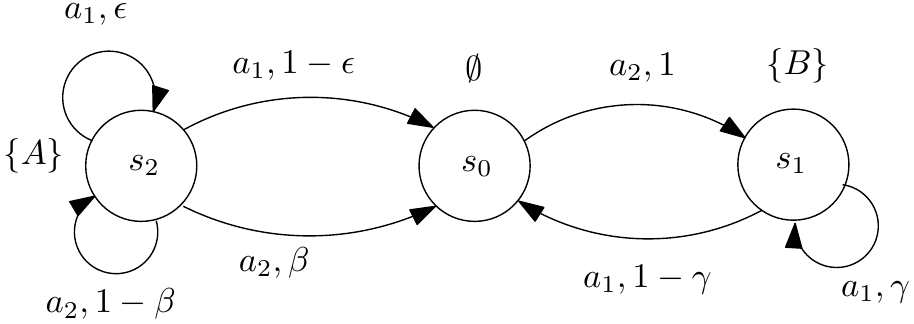}}%
\caption{An MDP where $S=\{s_0,s_1, s_2\}$, $A=\{a_1,a_2\}$, $\ap=\{A,B\}$, $l(s_{2})=\{A\}$, $l(s_0) = \emptyset$, $l(s_1)=\{B\}$. Labels on each edge indicate the corresponding action and transition probability. $\epsilon,\beta,\gamma \in (0,1)$.}
\label{fig:mdp}
\end{center}
\end{figure}



Given an MDP, $\pi:S\rightarrow A$ is called a policy. A policy $\pi$ is stationary if it does not change over time. In traditional RL, the probabilistic transition relation $\Delta_M$ is assumed to be unknown, and the agent is required to find an optimal control policy $\pi^*$ that maximizes the expected sum of rewards, i.e., $E^{\pi}\Big[\sum_{k=0}^T r_{k+t+1}\Big] \quad \text{ or } \quad E^{\pi}\Big[\sum_{k=0}^\infty \gamma^k r_{k+t+1}\Big]$
where $r_t$ is the reward obtained at time $t$, and $\gamma \in (0,1]$ is the discount factor. In the literature, various learning algorithms, e.g., Q-learning \cite{watkins1992}, were shown to find the optimal policy. 

In many real-life problems, the agent may also have constraints that should be satisfied during learning. For example, some safety constraints can be enforced by redefining the state-space (e.g., $S^{\prime} = S \setminus S_c$ where $S_c \subset S$ is the set of unsafe states and learning over $S^{\prime}$). However, not every constraint can be easily satisfied by removing the violating states from the state-space. In some cases, the agent may be required to follow a trajectory that satisfies a complex specification throughout the learning process. For instance, consider a drone whose primary task is achieving a pick up and delivery task arriving periodically. Moreover, the drone can have a secondary task of maximizing situational awareness via aerial monitoring (e.g., traffic, infrastructure, or environmental monitoring). In this scenario, the primary task can be considered as the constraint of the drone, and the secondary task can be formulated as an RL problem where the reward at each state represents the value of information that can be collected from the corresponding location. Overall, the objective becomes to learn a policy that maximizes the expected cumulative reward subject to the pick up and delivery constraint that should always be completed during the learning process. 

Note that learning a policy to satisfy a TL specification can be achieved via RL (e.g., \cite{aksaray2016,sadigh2014,bozkurt2020}). However, such methods do not guarantee the satisfaction of the TL specification throughout the learning process. This paper proposes a constrained RL algorithm with a probabilistic guarantee on the satisfaction of TL constraints in every episode of learning, which is different from learning to satisfy the constraints after sufficient training. Ensuring the desired probability of constraint satisfaction throughout learning has two critical advantages: 
1) In the proposed approach, having a finite learning horizon may cause suboptimality but not infeasibility.
2) The proposed approach can be used to improve an initial policy during the mission while always maintaining feasibility, which is a crucial capability since violating the constraints may have severe consequences. 

\subsection{Temporal Logics As Constraints To Learning Problems}
The FSA of a bounded TL specification allows the tracking of progress toward satisfaction by compactly encoding all the accepting words. However, RL problems are typically posed in a stochastic setting where the actions may not always result in desired transitions that can lead to the violation of TL. Thus, a total FSA, which either accepts or rejects a word \cite{burgin2013}, is needed in learning problems to track both satisfying and violating cases. 
Alternatively, a temporal relaxation of the bounded TL can be used in learning problems since the backward edges in the FSA of a temporally relaxed formula also encode possible regression in the task satisfaction.


Furthermore, representing the transitions after the TL satisfaction is also important in learning problems because learning can continue if there is remaining time in the episode. To this end, a self loop needs to be added to the accepting state of the FSA (e.g., a self-loop to the state $q_3$ in Fig.~\ref{fig:fsa}(b)). Throughout this paper, when we say the FSA of a TL formula $\phi$, we consider the FSA\footnote{An algorithm for the construction of the FSA for a temporally relaxed TWTL can be found in \cite{twtl}.} corresponding to the temporal relaxation of the formula, i.e., $\phi(\tau)$, with a self-loop added to the accepting state.  
Now, we formally define the RL problem with constraint satisfaction during learning. 
\begin{problem} 
Given an MDP $M=(S,A,\Delta_M,R)$ with unknown transition probability function $\Delta_M$ and unknown reward function $R$, a set of atomic propositions $\ap$, a labeling function $l:S \rightarrow 2^\emph{$\ap$}$, let $\phi$ be a TL constraint that needs to be satisfied periodically during learning. Given a desired probability of satisfaction $Pr_{des} \in [0,1)$, learn the optimal control policy
\begin{equation}
\label{pistareq}
\pi^* = \arg\max\limits_{\pi } E^{\pi}\Big[\sum_{t=0}^\infty \gamma^t r_t \Big] 
\end{equation}
such that, in each episode $j$ of the learning process,
\begin{equation}
\label{eq:Qobj}
\begin{aligned}
\Pr\big(\mathbf{o}(j(T+1)),\mathbf{o}(j(T+1)+1),\dots,\mathbf{o}(j(T+1)+T) \models \phi{(\tau_j)}\big) \color{black} &\geq Pr_{des}, \quad \forall j\geq0\\
\norm{\phi(\tau_j)} &\leq T,
\end{aligned}
\end{equation}
where $s_t$ is the state at time $t$, $T=\norm{\phi(0)}$ is the time bound\footnote{The time bound of $\phi$ is the maximum time needed to satisfy it \cite{twtl}.} of $\phi$ with no relaxation, $\mathbf{o}(j(T+1)),\mathbf{o}(j(T+1)+1),\dots,\mathbf{o}(j(T+1)+T)$ is an output word based on the state sequence $s_0,s_1,s_2,\dots, s_T$ over the MDP (e.g., $\mathbf{o}(j(T+1)) = l(s_{j(T+1)})$), $\norm{\phi(\tau_j)}$ is the time bound of the relaxed formula $\phi(\tau_j)$, and $\gamma \in (0,1]$ is the discount factor. 
\label{problem}
\end{problem}


\section{Proposed Approach}

Learning with TL objectives or constraints cannot be achieved over a standard MDP which contains only the agent's current state. Suppose that an agent can move in both cardinal and ordinal directions on a grid as shown in Fig.~\ref{fig:productmdp-mot}. Let the task be ``eventually visit A and then B". Given the agent's current state shown in triangle, it must learn to select 1) the action shown in green arrow if region A has not been visited yet, and 2) the action shown in red arrow if region A has been visited before. Overall, the agent needs to know both its current state and the task status to decide what to do next. An automaton constructed from the TL specification naturally keeps track of the task's progress. Hence, a typical approach to encode both the state transitions over an MDP and the task progress is to construct a product MDP.  

\begin{figure}[htb]
    \centering
    \includegraphics[width=0.4\columnwidth, trim={0 2.85cm 0 0cm},clip]{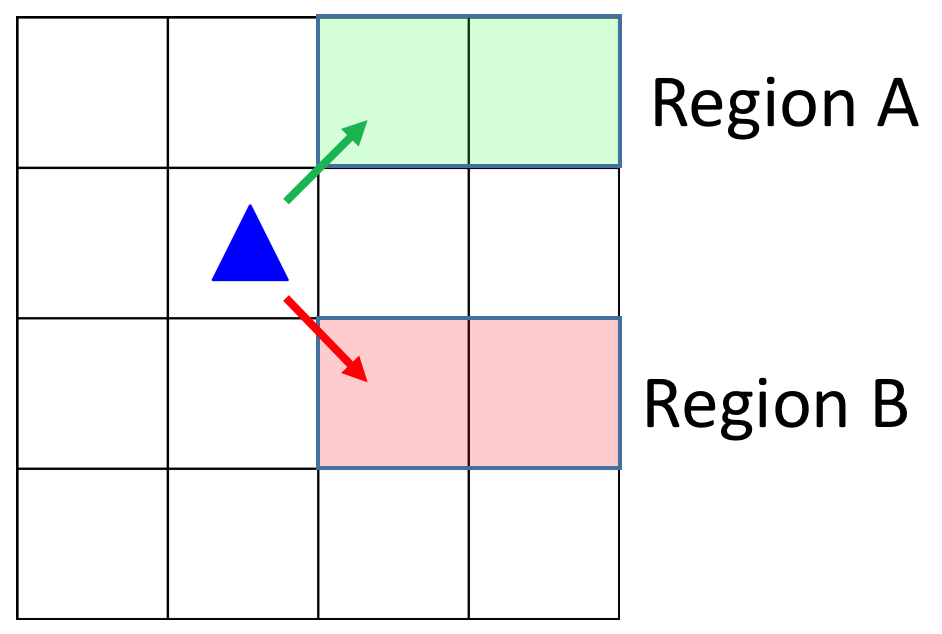}
    \caption{An illustration of an agent (shown in triangle) and a task such as "eventually visit $A$ and then visit $B$". 
    }
    \label{fig:productmdp-mot}
\end{figure}

\begin{definition} (Product MDP)
Given an MDP $M$, a set of atomic propositions \emph{$\ap$}, a labeling function $l:S \rightarrow 2^\emph{$\ap$}$, and an FSA $\FA$, a product MDP is a tuple $\FP =M \times \FA = (S_{\FP},P_{init},A,\Delta_\FP, R_{\FP}, F_{\FP})$, where

\begin{itemize}
\item $S_\FP = S \times Q$ \textit{is a finite set of states;}
\item $P_{init}=S \times \{q_{init}\} \subseteq S_\FP$ \textit{is the set of initial states;}
\item $A$ \textit{is the set of actions;}
\item $\Delta_\FP : S_\FP \times A \times S_\FP \rightarrow [0,1]$ \textit{is the probabilistic transition relation such that for any two states, $p=(s,q) \in S_\FP$ and $p^{\prime}=(s^{\prime},q^{\prime}) \in S_\FP$, and any action $a \in A$, $\Delta_\FP(p,a,p^\prime)=\Delta_M(s,a,s^\prime)$ and $\delta(q,l(s))=q^\prime$}; 
\item $R_{\FP}:S_\FP \times A \rightarrow \mathbb{R}$ \textit{is the reward function such that $R_{\mathcal{P}}(p,a) = R(s,a)$ for $p=(s,q) \in S_\FP$;}
\item $F_\FP = (S \times F_\FA) \subseteq S_\FP$ \textit{is the set of accepting states.}
\end{itemize}
\end{definition}

In Problem~\ref{problem}, the time bound of the TL constraint $\phi$ determines the episode length, and the remaining episode time is also critical to the optimal action selection. Again, consider the example in Fig.~\ref{fig:productmdp-mot}. This time, let ``eventually visit A and then visit B" be a constraint during learning. If the agent shown in triangle has not satisfied this constraint yet and the episode has just started, it can either explore the environment by selecting an admissible action or make progress towards the satisfaction of the constraint by selecting the green arrow. Now, suppose that the agent is at the same state, the constraint has not been satisfied yet, but the episode is about to finish. In that case, the agent must pick the green arrow for constraint satisfaction. Overall, learning under a bounded TL constraint needs to be achieved over a space that encodes the physical state, the automaton state, and the remaining episode time. 

\begin{definition} (Time-Product MDP)
Given a product MPD $\mathcal{P}$ and a time set $\mathcal{T}=\{0,\dots,T\}$, a time-product MPD is a tuple $\mathcal{P}^{\mathcal{T}} = \mathcal{P} \times \mathcal{T} =(S^{\mathcal{T}}_{\mathcal{P}}, P^{\mathcal{T}}_{init}, A, \Delta^{\mathcal{T}}_\FP, R^{\mathcal{T}}_{\mathcal{P}}, F^{\mathcal{T}}_{\mathcal{P}})$ where,
\begin{itemize}
\item $S^{\mathcal{T}}_{\mathcal{P}} = S_{\mathcal{P}} \times \mathcal{T}$ is a finite set of states;


\item $P^{\mathcal{T}}_{init} = P_{init} \times \{0\} \subseteq S^{\mathcal{T}}_{\mathcal{P}}$ is the set of initial states;
\item $A$ is the set of actions;
\item $\Delta_\FP^{\mathcal{T}} : S^{\mathcal{T}}_{\mathcal{P}} \times A \times S^{\mathcal{T}}_{\mathcal{P}} \mapsto [0,1]$ is the probabilistic transition relation such that $\Delta_\FP^{\mathcal{T}}(p_i^t,a,p_j^{t+1})=\Delta_\FP(p_i,a,p_j)$ for an action $a \in A$ and two time-product MDP states $p_i^t=(p_i,t) \in S^{\mathcal{T}}_{\mathcal{P}}$ and $p_j^{t+1}=(p_j,t+1) \in S^{\mathcal{T}}_{\mathcal{P}}$;
\item $R^{\mathcal{T}}_{\mathcal{P}}  : S^{\mathcal{T}}_{\mathcal{P}} \times A \mapsto \mathbb{R}$ is the reward function such that $R^{\mathcal{T}}_{\mathcal{P}}(p^t ,a)=R_{\mathcal{P}}(p,a)$ and $p^t=(p,t) \in S^{\mathcal{T}}_{\mathcal{P}}$;
\item $F^{\mathcal{T}}_{\mathcal{P}} = (F_{\mathcal{P}} \times \mathcal{T}) \subseteq S^{\mathcal{T}}_{\mathcal{P}}$ is the set of accepting states.
\end{itemize}
\label{def:tmdp}
\end{definition}

In this paper, we propose a solution to Problem~\ref{problem} by introducing a modified Q-learning algorithm which is executed over a time-product MDP and ensures probabilistic guarantees for TL constraint satisfaction while learning the optimal policy. The proposed algorithm is different than standard Q-learning in terms of the action selection at each state. In standard Q-learning, actions are selected based on only exploration-exploitation considerations. In our problem, we want to achieve probabilistic constraint satisfaction guarantees in each episode. To this end, we derive an equation that can evaluate the worst case probability of reaching an accepting state from the current state on the time-product MDP. In cases where the worst case satisfaction probability of the next state becomes smaller than the desired probability, an action is selected to minimize the distance to the accepting states. Before explaining the details of the algorithm, we make some mild assumptions and introduce a few definitions.

\begin{assumption}
\label{assm}
  Given some $\epsilon \in [0,1)$, for each state $s$ and action $a$ of the MDP, the states $s^{'}$ such that $\Delta_{M}(s,a,s^{\prime})> 0$ and the states $s^{\prime \prime}$ such that $\Delta_{M}(s,a,s^{\prime \prime})\geq 1- \epsilon$ are known.
\end{assumption}


Note that Assumption \ref{assm} does not require knowing the actual transition probabilities. Instead, it only requires knowing which transitions are feasible and which of those feasible transitions are sufficiently likely to occur for each state-action pair. For example, for a mobile agent as in Fig. \ref{fig:productmdp-mot}, suppose that each action (e.g., ``move up") results in moving to the desired cell with probability $0.9$ or moving to one of the other adjacent cells with probability $0.1$. While the actual values of these probabilities are unknown, some prior information (empirical data) may indicate that for each action the only transition that occurs with probability at least $0.7$ ($\epsilon=0.3$) is moving to the desired cell. 


\color{black}






\begin{definition}[$\epsilon-$Stochastic Transition] 
For any time-product MDP and $\epsilon \in [0,1)$, we say that $(p_i^t,a,p_j^{t+1})$ is an $\epsilon-$stochastic transition if the probability of such a transition is at least $1-\epsilon$, i.e., $\Delta_\FP^{\mathcal{T}}(p_i^t,a,p_j^{t+1}) \geq 1-\epsilon$.
\label{e-filt-trans}
\end{definition}

As per the definition of $\epsilon-$stochastic transitions, a $0$-stochastic transition $(p_i^t,a,p_j^{t+1})$ is one that occurs with probability 1 if action $a$ is taken at state $p_i^t$. At the other extreme, as $\epsilon$ approaches 1, any feasible transition $(p_i^t,a,p_j^{t+1})$ becomes an  $\epsilon$-stochastic transition. Next, we will define the \emph{distance under $\epsilon-$stochastic transitions}. Note that there exist similar distance definitions in the literature (e.g., \cite{bisoffi2018,ulusoy2014,kantaros2020stylus}) for control synthesis problems. In this paper, we will use the distance under $\epsilon-$stochastic transitions to derive a novel lower bound on the probability of reaching an accepting state within a desired time.

\begin{definition}[Distance under $\epsilon-$Stochastic Transitions] 
For any two time-product MDP states $p_i^t,p_j^{t+\Delta t} \in S^{\mathcal{T}}_\FP$ and $\epsilon \in [0,1)$, 
the distance between these two states under $\epsilon-$stochastic transitions is $dist^{\epsilon}(p_i^t,p_j^{t+\Delta t}) = \Delta t$ if there exists a path from $p_i^t$ to $p_j^{t+\Delta t}$ under  $\epsilon-$stochastic transitions. Also, for each $p_i^t \in S^{\mathcal{T}}_\FP$, $N_\epsilon(p_i^t)$ denotes the neighborhood of $p_i^t=(p_i,t)$ under $\epsilon-$stochastic transitions, i.e.,
\begin{equation}
\small
N_\epsilon(p_i^t) = \{(p_j,t+1) \mid \exists a \in A: \Delta_\FP(p_i,a,p_j) \geq 1-\epsilon \},  
\label{eq:neighbor}
\end{equation}
which is the set of time-product MDP states $p_j^{t+1}$ that can be reachable from $p_i^t$ in one time step, i.e., $dist^{\epsilon}(p_i^t,p_j^{t+1})=1$.
\label{ML-distance}
\end{definition}


Note that 
if two time-product MDP states $p_i^t$ and $p_j^{t+1}$ are disconnected, then $dist^{\epsilon}(p_i^t,p_j^{t+1})= \infty$. The distance of a time-product MDP state $p_i^t$ to the set $X \subseteq S^{\mathcal{T}}_{\FP}$ under $\epsilon-$stochastic transitions is $d^{\epsilon}(p_i^t,X) = \min_{ p_j^\tau \in X}dist^{\epsilon}(p_i^t,p_j^{\tau})$. In the following definition, we consider the set $X$ as the set of accepting states $F_{\FP}^\mathcal{T} \subseteq S_{\FP}^{\mathcal{T}}$, and $d^{\epsilon}(p_i^t)$ will refer to the distance from $p_i^t$ to the set $F_{\FP}^\mathcal{T}$ under $\epsilon-$stochastic transitions in the remainder of the paper.   
\begin{definition} 
[Distance-To-$F_{\FP}^{\mathcal{T}}$] For any state $p^t \in S_{\mathcal{P}}^\mathcal{T}$ of a time-product MDP, the distance of $p^t$ to the set of accepting states $F_{\FP}^\mathcal{T}$ under $\epsilon$-stochastic transitions is
\begin{equation}
d^{\epsilon}(p^t) = \min\limits_{ p_j^{\tau} \in F_{\mathcal{P}}^{\mathcal{T}}}dist^{\epsilon}(p^t,p_j^\tau).
\label{eq:distance}
\end{equation}
\label{distance}
\end{definition}

  \vspace{-10mm}
\begin{assumption}
\label{assump:distance1}
The MDP and the FSA are such that any feasible transition on the resulting time-product MDP can increase the distance-to-$F_{\FP}^{\mathcal{T}}$ by at most one.
\end{assumption}

\begin{remark}
We use Assumption \ref{assump:distance1} for the simplicity of the expressions derived in this paper, particularly the lower bound in \eqref{lower-bound}. This assumption can be relaxed/removed and Alg. 1 can be modified accordingly to obtain similar guarantees on the constraint satisfaction. For example, the maximum increase in  distance-to-$F_{\FP}^{\mathcal{T}}$ under  the feasible transitions on the time-product MDP can be used to obtain a similar yet more conservative lower bound in \eqref{lower-bound}. Accordingly, a variant of Alg. 1 can be designed to ensure the desired probability of constraint satisfaction without requiring Assumption \ref{assump:distance1}. 
\end{remark}

\begin{definition}
[Go-to-$F_{\FP}^{\mathcal{T}}$ Policy] Given any time-product MDP and $\epsilon \in [0,1)$, 
Go-to-$F_{\FP}^{\mathcal{T}}, $  $\pi^{\epsilon}_{GO}: S_{\FP}^{\mathcal{T}} \rightarrow A $, is a stationary policy over the time-product MDP $\FP^{\mathcal{T}}$ such that
\begin{equation}
\label{goeq}
\pi^{\epsilon}_{GO}(p^t) = \arg\min_{a \in A} d^{\epsilon}_{min}(p^t,a),
\end{equation}
where $d^{\epsilon}_{min}(p^t,a)$ is the smallest distance-to-$F_{\FP}^{\mathcal{T}}$ among the states that can be reached from $p^t$ via $a$ with probability at least $1-\epsilon$, i.e., $d^{\epsilon}_{min}(p^t,a)= \min\limits_{p_j^{t+1}:\Delta_\FP^{\mathcal{T}}(p^t,a,p_j^{t+1})\geq1-\epsilon} d^\epsilon(p_j^{t+1}).$
\end{definition}

\begin{lemma} For any $p^t \in  S_{\FP}^{\mathcal{T}}$ and integer $k\geq 0$, let ${\Pr\big(p^t \xrightarrow{k} F_\FP^{\mathcal{T}};\pi^{\epsilon}_{GO} \big)}$ be the probability of reaching the set of accepting states $F_{\FP}^\mathcal{T} \subseteq S_{\FP}^\mathcal{T}$ from $p^t$ in the next $k$ time steps under the policy $\pi^{\epsilon}_{GO}$. If $d^\epsilon(p^t) < \infty$ for every $p^t \in  S_{\FP}^{\mathcal{T}}$, then
\begin{equation}
\Pr\big(p^t \xrightarrow{k} F_\FP^{\mathcal{T}};\pi^{\epsilon}_{GO} \big) \geq \sum\limits_{i=0}^{\lfloor \frac{k-d^\epsilon(p^t)}{2}\rfloor}  \frac{k!}{(k-i)!i!} \epsilon^i (1-\epsilon)^{k-i},    
\label{lower-bound}
\end{equation}
for every state $p^t \in S_{\FP}^\mathcal{T}$ such that $k \geq d^\epsilon(p^t)$.
\label{lemma}
\end{lemma}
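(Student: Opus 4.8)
The plan is to reduce the claim to a one-dimensional biased random walk on the integer-valued quantity $d^{\epsilon}(\cdot)$ and then to compare that walk against a binomial count. Write $d = d^{\epsilon}(p^t)$ and $m = \lfloor (k-d)/2 \rfloor$, which is nonnegative precisely because $k \ge d^{\epsilon}(p^t)$. Let $d_n$ denote the distance-to-$F_{\FP}^{\mathcal{T}}$ of the state occupied after $n$ steps when actions are chosen by $\pi^{\epsilon}_{GO}$, so $d_0 = d$. I would classify each step as \emph{good} or \emph{bad}: by the definition of $d^{\epsilon}$ in \eqref{eq:distance} and of $\pi^{\epsilon}_{GO}$ in \eqref{goeq}, at any state not in $F_{\FP}^{\mathcal{T}}$ the chosen action leads, with probability at least $1-\epsilon$, to an $\epsilon$-stochastic successor whose distance is exactly $d_n - 1$ (it cannot be smaller without contradicting minimality of $d_n$); call this event a good step. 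The complementary event (probability at most $\epsilon$) is a bad step, and by Assumption~\ref{assump:distance1} a bad step can raise the distance by at most one, i.e. $d_{n+1} \le d_n + 1$. The hypothesis that $d^{\epsilon}(\cdot) < \infty$ everywhere guarantees that a distance-decreasing action exists at every state the walk can visit, including states entered after bad steps, so the good-step description remains valid throughout the horizon.

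The first key step is a deterministic counting claim: if at most $m$ bad steps occur among the first $k$ steps, then $F_{\FP}^{\mathcal{T}}$ is reached within $k$ steps. I would argue by contradiction, assuming $d_n \ge 1$ for all $n \le k$ (so $F_{\FP}^{\mathcal{T}}$ is never reached). Then every good step decreases the distance by exactly one and every bad step raises it by at most one, so with $i \le m$ bad steps we obtain $d_k \le d - (k-i) + i = d - k + 2i \le d - k + 2m \le 0$, using $2m \le k-d$. This contradicts $d_k \ge 1$ and establishes the claim. Consequently $\Pr\big(p^t \xrightarrow{k} F_{\FP}^{\mathcal{T}};\pi^{\epsilon}_{GO}\big)$ is at least the probability that the number of bad steps in $k$ steps is at most $m$.

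The second key step is to lower bound the probability of at most $m$ bad steps by the binomial tail in \eqref{lower-bound}. The trials are neither independent nor of constant failure probability — the conditional probability of a bad step given the past is only guaranteed to be at most $\epsilon$ — so I would invoke a coupling/stochastic-dominance argument rather than a direct computation. Concretely, draw i.i.d. uniforms $U_1,\dots,U_k$ on $[0,1]$, declare the $n$-th step bad iff $U_n$ falls below its history-dependent bad-step probability $p_n \le \epsilon$, and set $B_n = \mathbf{1}[U_n < \epsilon]$. Then each bad step forces $B_n = 1$, so the bad-step count is dominated pointwise by $B = \sum_n B_n \sim \mathrm{Binomial}(k,\epsilon)$; hence the number of bad steps is at most $m$ whenever $B \le m$, which gives $\Pr(\text{at most } m \text{ bad steps}) \ge \Pr(B \le m) = \sum_{i=0}^{m} \binom{k}{i}\epsilon^i (1-\epsilon)^{k-i}$. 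Substituting $m = \lfloor (k-d)/2 \rfloor$ yields exactly \eqref{lower-bound}.

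I expect the main obstacle to be making the reduction to the binomial rigorous, since the bad-step probabilities are history-dependent and the chain is effectively absorbed once $F_{\FP}^{\mathcal{T}}$ is reached. One must argue carefully that the decrease-by-exactly-one description holds at every reachable non-accepting state (this is where global finiteness of $d^{\epsilon}$ enters) and that replacing the true, possibly smaller, failure probabilities $p_n$ by the constant $\epsilon$ can only decrease the probability of the event $\{\text{few bad steps}\}$. The coupling above is the clean way to handle both the dependence and the inequality $p_n \le \epsilon$ at once, while the deterministic counting claim together with Assumption~\ref{assump:distance1} disposes of the absorption: reaching distance $\le 0$ in the unabsorbed bookkeeping can only mean the walk genuinely hit $F_{\FP}^{\mathcal{T}}$ at some earlier step.
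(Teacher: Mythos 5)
Your proof is correct and takes essentially the same route as the paper's: the same dichotomy of steps into ``good/intended'' (probability at least $1-\epsilon$, distance decreases by one) versus ``bad/unintended'' (distance increases by at most one, via Assumption~\ref{assump:distance1}), the same counting condition that at most $\lfloor (k-d^\epsilon(p^t))/2\rfloor$ bad steps forces absorption in $F_{\FP}^{\mathcal{T}}$, and the same binomial tail as the resulting bound. The only difference is that where the paper simply asserts the binomial lower bound from the per-step probability bounds, you justify it with an explicit coupling/stochastic-dominance argument handling the history-dependence of the bad-step probabilities $p_n \leq \epsilon$ — a welcome tightening of a step the paper treats informally, but not a different proof strategy.
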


\begin{proof}
If $d^\epsilon(p^t) < \infty$ for every $p^t \in  S_{\FP}^\mathcal{T}$, then it is possible to reach the set of accepting states $F_{\FP}^{\mathcal{T}}$ from any state $p^t \notin  F_{\FP}^{\mathcal{T}}$  via a finite number of $\epsilon$-stochastic transitions. For any such state $p^t \notin  F_{\FP}^\mathcal{T}$, the policy $\pi^{\epsilon}_{GO}$ in \eqref{goeq} selects actions that drive the system to the set of accepting states over a shortest path under the $\epsilon$-stochastic transitions. Accordingly, under  $\pi^{\epsilon}_{GO}$, each action reduces the distance to the set of accepting states by one with probability at least $1-\epsilon$. In the remainder of proof, we refer to such transitions as ``intended transitions".  Furthermore, due to Assumption \ref{assump:distance1}, the distance to the set of accepting states can increase at most by one under any feasible transition that may happen with the remaining probability (``unintended transitions"). 
Note that observing at most $i_{max}$  unintended transitions within the next $k \geq i_{max}$ transitions ensures that the system reaches $F_{\FP}^\mathcal{T}$ as long as $i_{max}\leq k-i_{max}-d^\epsilon(p^t)$, i.e.,  $i_{max}\leq  \frac{k-d^\epsilon(p^t)}{2}$.
Moreover, for any  $i \leq i_{max}$, the number of all possible $k$-length sequences involving $i$ unintended transitions and $k-i$ intended transitions is $\frac{k!}{(k-i)!i!}$.
Accordingly, since each intended transition occurs with probability at least $1-\epsilon$, for any $i_{max}\leq k$, the probability of observing a sequence of $k$ transitions involving at least $k-i_{max}$ intended transitions (at most $i_{max}$ unintended transitions) is lower bounded by
\begin{equation}
\sum\limits_{i=0}^{i_{max}}  \frac{k!}{(k-i)!i!} \epsilon^i (1-\epsilon)^{k-i}.
\label{pr-seq}
\end{equation}

By setting $i_{max}$ equal to the largest possible value ensuring convergence to $F_{\FP}^\mathcal{T}$, i.e., $i_{max}= \lfloor \frac{k-d^\epsilon(p^t)}{2}\rfloor$, we obtain \eqref{lower-bound} for every $p^t \in S_{\FP}^\mathcal{T}$ such that $k \geq d^\epsilon(p^t)$.
\end{proof}

In light of Lemma~\ref{lemma}, we propose Alg.~1 for the construction of the time-product MDP and pruning of the feasible actions at each time-product MDP state. Overall, if an action taken at a time-product MDP state $p^t$ may result in a transition to another state $r^{t+1}$ such that \eqref{lower-bound} does not hold for $r^{t+1}$ or the remaining episode time is smaller than  $d^\epsilon(r^{t+1})$, then Alg.~1 removes that action from the feasible action set of $p^t$.  

 \begin{algorithm}[htb!]
 \label{alg1}
 \begin{center}
\resizebox{\columnwidth}{!}{
\begin{tabular}{ll}
\bf{Alg. 1:} \textbf{Offline construction of the pruned time-product MDP} \\
\hline
 \emph{Input:} $T$ (episode length), $\epsilon$ (motion uncertainty according to Assumption~\ref{assm})\\
 \emph{Input:}  $M$ (MDP), $\Phi$ (TL task), $Pr_{des}$ (desired satisfaction probability)\\ 
\emph{Output:}  $\mathcal{P}^{\mathcal{T}}$ (pruned time-product MDP)\\
\hline 
\mbox{\small $\;1:\;$}Create FSA of $\phi$, $\mathcal{A}  = (\mathcal{Q},q_{init},AP,\delta,F_{\mathcal{A} })$;  \\
\mbox{\small $\;2:\;$}Create product MDP, $ \mathcal{P} = M \times \mathcal{A}=(S_{\mathcal{P}}, P_{init}, A, \Delta_\FP, R_{\mathcal{P}}, F_{\mathcal{P}})$; \\ 
\mbox{\small $\;3:\;$}Create time-product MDP, $ \mathcal{P}^{\mathcal{T}} = \mathcal{P} \times \{0,\dots,T\}=(S^{\mathcal{T}}_{\mathcal{P}}, P^{\mathcal{T}}_{init}, A, \Delta_\FP^{\mathcal{T}}, R^{\mathcal{T}}_{\mathcal{P}}, F^{\mathcal{T}}_{\mathcal{P}})$; \\ 
\mbox{\small $\;4:\;$}Calculate the distance-to-$F_{\mathcal{P}}$, i.e., $d^\epsilon(p^t)$ for all $p^t \in S_{\mathcal{P}}^\mathcal{T}$ based on \eqref{eq:distance};\\
\mbox{\small $\;5:\;$}\textbf{Initialization:} $Act(p^t)=A$ for all $p^t \in S^\mathcal{T}_\mathcal{P}$ \\
\mbox{\small $\;6:\;$}\hspace{0.1cm}\textbf{for} each non-accepting state $p^t \in S_\mathcal{P}^\mathcal{T} \setminus F_\mathcal{P}^{\mathcal{T}}$ \\
\mbox{\small $\;7:\;$}\hspace{0.5cm}\textbf{for} each action $a \in Act(p^t)$ and $t \in\{0,\dots,T-1\}$\\
\mbox{\small $\;8:\;$}\hspace{0.9cm}Find $N (p^t,a)=\{(r,t+1)|\Delta_\FP(p,a,r)>0\}$ (states reachable from $p^t$ under $a$); \\
\mbox{\small $\;9:\;$}\hspace{0.9cm}$d_{max}= \max\limits_{r^{t+1} \in N(p^t,a)} d^\epsilon(r^{t+1})$; \\
\mbox{\small $\;10:\;$}\hspace{0.75cm}$k=T-t$ (the remaining episode time); \\
\mbox{\small $\;11:\;$}\hspace{0.75cm}$i_{max}= \lfloor \frac{k-1-d_{max}}{2}\rfloor$; \\
\mbox{\small $\;12:\;$}\hspace{0.75cm}\textbf{if } $i_{max} < 0$ or $\sum\limits_{i=0}^{i_{max}}  \frac{(k-1)!}{(k-1-i)!i!} \epsilon^i (1-\epsilon)^{k-1-i}<Pr_{des}$ \\
\mbox{\small $\;13:\;$}\hspace{1.15cm}$Act(p^t)=Act(p^t)\setminus \{a\}$ ;\\
\mbox{\small $\;14:\;$}\hspace{0.75cm}\textbf{end if}\\
\mbox{\small $\;15:\;$}\hspace{0.4cm}\textbf{end for}\\
\mbox{\small $\;16:\;$}\hspace{0.1cm}\textbf{end for}\\
\mbox{\small $\;17:\;$}$ \mathcal{P}^{\mathcal{T}} =(S^{\mathcal{T}}_{\mathcal{P}}, P^{\mathcal{T}}_{init}, Act:S_\FP^\mathcal{T} \rightarrow 2^A, \Delta_\FP^{\mathcal{T}}, R^{\mathcal{T}}_{\mathcal{P}}, F^{\mathcal{T}}_{\mathcal{P}})$;
\end{tabular}}
\end{center}
\end{algorithm}


Algorithm~1 is executed offline and its inputs are MDP $M$, the TL constraint $\phi$, the desired probability of satisfaction $Pr_{des}$, episode length $T$ calculated from the time bound of $\phi$, and the algorithm parameter $\epsilon$ which is a conservative bound of the motion uncertainty according to Assumption~\ref{assm}. Algorithm~1 starts with the construction of the FSA, then the product MDP, and then the time-product MDP (lines 1-3). For each time-product MDP state, the distance-to-$F_{\FP}$ is calculated based on \eqref{eq:distance} (line 4). Then, the feasible action set $Act(.)$ at each time-product MDP state is initialized with, $A$, the action set of MDP (line 5). Note that some actions in $A$ at particular states should not be taken to ensure the TL satisfaction. For example, if the constraint has not been satisfied yet and the remaining episode time is small, then actions leading to progress to the satisfaction should be selected rather than doing random exploration. For this reason, lines 6-16 are executed to prune the action sets to ensure the probabilistic satisfaction of $\phi$.
At each non-accepting state $p^t$ and for each action $a$ that can be taken at $p^t$, first the set of states that can be reached from $p^t$ under $a$ are found (line 8). Then, the maximum distance-to-$F_\FP$ is computed (line 9). This mainly captures the worst case (the furthest distance to the satisfaction) after taking $a$. Line 10 computes the number of actions $k$ that can be taken within the remaining episode time (including the current action selection). Line 11 calculates $i_{max}$ based on $k$ and $d_{max}$. Note that if $i_{max}<0$, then the distance-to-$F_\FP$ is greater than the number of actions that can be taken in the next time step $k-1$, which means that there is no way to satisfy the constraint. Thus, the algorithm considers the worst case situation (increasing the distance to $F_\FP$ by one) after taking action $a$. If there is insufficient time to satisfy the constraint ($i_{max}<0$) or the lower bound satisfaction probability (Lemma~\ref{lemma}) at the worst case state is less than $Pr_{des}$, then $a$ is pruned from $Act(p^t)$ (line 13). After this pruning routine is done for each non-accepting state, the pruned time-product MDP is constructed in line 17. 

\color{black}
Finally, we propose Alg.~2 which is a modified $Q$-learning algorithm executed over the pruned time-product MDP. For each time-product MDP state $z$, if the feasible action set $Act(z)$ is empty, then an action is selected according to the policy $\pi^\epsilon_{GO}$ in order to do progress to the constraint satisfaction (line 5). If $Act(z)$ is not empty, then an action is selected from $Act(z)$ (line 7). The general steps for the $Q$-updates are achieved in lines 9-12. When an episode ends, the next episode starts at the current physical state, but the automaton state and time are initialized as in line 14.

\begin{algorithm}[htb!]
\label{alg2}
\begin{center}
\resizebox{\columnwidth}{!}{
\begin{tabular}{ll}
\bf{Alg. 2:} \textbf{Probabilistically Guaranteed Constraint Satisfaction During Q-Learning} \\
\hline
 \emph{Input:} Time-product MDP $\mathcal{P}^{\mathcal{T}} =(S^{\mathcal{T}}_{\mathcal{P}}, P^{\mathcal{T}}_{init}, Act:S_\FP^\mathcal{T} \rightarrow 2^A, \Delta_\FP^{\mathcal{T}}, R^{\mathcal{T}}_{\mathcal{P}}, F^{\mathcal{T}}_{\mathcal{P}})$ \\
  \emph{Input:} Initial MDP state $s_{init}$ \\
\emph{Output:}  $\pi:S^\mathcal{T}_\mathcal{P} \rightarrow A$ (policy maximizing the sum of rewards under TL constraint)\\
\hline 
\mbox{\small $\;1:\;$}\textbf{Initialization:} Initial $Q-$table, $z=(s,q_{init},0) \in P_{init}^{\mathcal{T}}$ ;  \\
\mbox{\small $\;2:\;$}\hspace{0.05cm}\textbf{for}\hspace{0.1cm}  $j=0:N_{episode}$ \\
\mbox{\small $\;3:\;$}\hspace{0.5cm}\textbf{for} $t=0:T-1$ \\ 
\mbox{\small $\;4:\;$}\hspace{0.9cm}\textbf{if} $Act(z)=\emptyset$\\
\mbox{\small $\;5:\;$}\hspace{1.4cm} $a= \pi^\epsilon_{GO}(z)$;\\
\mbox{\small $\;6:\;$}\hspace{0.9cm}\textbf{else}\\
\mbox{\small $\;7:\;$}\hspace{1.4cm}Select an action \emph{a} from $Act(z)$ via $\epsilon-$greedy or $\pi$;\\
\mbox{\small $\;8:\;$}\hspace{0.9cm}\textbf{end if} \\
\mbox{\small $\;9:\;$}\hspace{0.9cm}Take action \emph{a}, observe the next state $z^\prime=(s^\prime,q^\prime, t+1)$ and reward \emph{r};\\
\mbox{\small $\;10:\;$}\hspace{0.75cm}$Q(z,a) = (1-\alpha_{ep}) Q(z,a) + \alpha_{ep} \big[ r + \gamma \max\limits_{a^\prime}  Q(z^\prime,a^{\prime}) \big]$; \\
\mbox{\small $\;11:\;$}\hspace{0.75cm}$\pi(z) = \arg\max\limits_a Q(z,a))$; \\
\mbox{\small $\;12:\;$}\hspace{0.75cm}$ z = z^\prime$; \\
\mbox{\small $\;13:\;$}\hspace{0.35cm}\textbf{end for}\\
\mbox{\small $\;14:\;$}\hspace{0.35cm}$ z = (s^\prime,q_{init},0)$; \\
\mbox{\small $\;15:\;$}\textbf{end for} 
\end{tabular}}
\end{center}
\end{algorithm}

 \vspace{-3mm}
\begin{theorem}[Constraint Satisfaction]
Given an MDP and some $\epsilon \in [0,1)$, let Assumptions~\ref{assm} and \ref{assump:distance1} hold. Let $T$ be the time bound of the TWTL constraint $\phi$ that should be satisfied with a probability of at least $Pr_{des}$ in each episode. If the set of initial states of the time-product MDP, i.e., $P_{init}^\mathcal{T}$, satisfies
\begin{equation}
    \label{eq:th1}
    \sum\limits_{i=0}^{\lfloor \frac{T-d^\epsilon(z)}{2}\rfloor}  \frac{T!}{(T-i)!i!} \epsilon^i (1-\epsilon)^{T-i} \geq Pr_{des}, \; \forall z \in P_{init}^\mathcal{T},
\end{equation}
then $\Pr\big(\mathbf{o}(jT),\mathbf{o}(jT+1),\dots,\mathbf{o}(jT+T) \models \phi{(\tau_j)}\big) \geq Pr_{des}$ for all $j \geq 0$, where $\mathbf{o}(jT),\mathbf{o}(jT+1),\dots,\mathbf{o}(jT+T)$ is the output word in episode $j$.
\end{theorem}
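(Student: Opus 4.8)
The plan is to reduce constraint satisfaction in an episode to a reachability event on the time-product MDP and then to propagate the initial-state bound \eqref{eq:th1} through the episode as an invariant. Since Alg.~2 resets the automaton component and the time index at the start of every episode (line~14), each episode begins at some $z\in P_{init}^\mathcal{T}$, so it suffices to bound a single episode uniformly over $j$. Using the FSA encoding of $\phi(\tau_j)$ together with the self-loop on the accepting state, the output word satisfies $\phi(\tau_j)$ with $\norm{\phi(\tau_j)}\le T$ exactly when the trajectory enters $F_\FP^\mathcal{T}$ within the $T$ steps of the episode. The goal thus becomes to show $\Pr(z\xrightarrow{T}F_\FP^\mathcal{T};\mu)\ge Pr_{des}$ for the (possibly randomized) action-selection rule $\mu$ that Alg.~2 follows, and I note that \eqref{eq:th1} is exactly the Lemma~\ref{lemma} bound at horizon $k=T$, i.e.\ it states $B_T(d^\epsilon(z))\ge Pr_{des}$ where $B_k(d):=\sum_{i=0}^{\lfloor (k-d)/2\rfloor}\binom{k}{i}\epsilon^i(1-\epsilon)^{k-i}$.

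The core is a backward induction on the time index $t=T,T-1,\dots,0$ establishing that every state $p^t$ satisfying the premise $B_{T-t}(d^\epsilon(p^t))\ge Pr_{des}$ has $g^\mu(p^t):=\Pr(p^t\xrightarrow{T-t}F_\FP^\mathcal{T};\mu)\ge Pr_{des}$. In the base case $t=T$ the premise forces $d^\epsilon(p^T)=0$ (assuming $Pr_{des}>0$), so $p^T\in F_\FP^\mathcal{T}$ and $g^\mu=1$. For the inductive step I split on the two branches of Alg.~2. When $Act(p^t)\neq\emptyset$ (line~7), the selected action $a$ survived the pruning of Alg.~1, so by line~12 we have $B_{T-t-1}(d_{max})\ge Pr_{des}$ with $d_{max}=\max_{r^{t+1}\in N(p^t,a)}d^\epsilon(r^{t+1})$; since $B_k(\cdot)$ is non-increasing in the distance and, by Assumption~\ref{assump:distance1}, every reachable successor satisfies $d^\epsilon(r^{t+1})\le d_{max}$, each successor again satisfies the premise at time $t+1$. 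The inductive hypothesis then gives $g^\mu(r^{t+1})\ge Pr_{des}$ for all successors, and averaging over $\Delta_\FP^{\mathcal{T}}(p^t,a,\cdot)$ yields $g^\mu(p^t)\ge Pr_{des}$.

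The delicate branch is $Act(p^t)=\emptyset$ (line~5), where Alg.~2 falls back to $\pi^{\epsilon}_{GO}$. Here I would invoke Lemma~\ref{lemma} over the full remaining horizon: the premise gives $B_{T-t}(d^\epsilon(p^t))\ge Pr_{des}$, and Lemma~\ref{lemma} directly yields $\Pr(p^t\xrightarrow{T-t}F_\FP^\mathcal{T};\pi^{\epsilon}_{GO})\ge B_{T-t}(d^\epsilon(p^t))\ge Pr_{des}$. I expect this to be the main obstacle, because Lemma~\ref{lemma} is a multi-step combinatorial bound valid for pure $\pi^{\epsilon}_{GO}$ execution, whereas Alg.~2 re-evaluates $Act(\cdot)$ at every step and could in principle switch back to exploration at a later state. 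A naive one-step expansion does not close the gap: an intended transition (probability $\ge 1-\epsilon$) reaches a state that still satisfies the premise because $B_{T-t-1}(d^\epsilon(p^t)-1)\ge B_{T-t}(d^\epsilon(p^t))$, but an unintended transition may land on a premise-violating state, so a single step guarantees only $(1-\epsilon)Pr_{des}$; the probability mass that Lemma~\ref{lemma} recovers through later $\pi^{\epsilon}_{GO}$ steps is lost.

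I would therefore close this branch by showing that the fallback region $\{p^t:Act(p^t)=\emptyset\}$ is absorbing under $\pi^{\epsilon}_{GO}$ — equivalently, that once all actions are pruned the algorithm commits to $\pi^{\epsilon}_{GO}$ for the remainder of the episode — so that Lemma~\ref{lemma} applies verbatim from the first fallback-triggering state onward. This requires a monotonicity argument for the pruning test as the remaining horizon shrinks, most cleanly obtained from the reading $B_k(d)=\Pr\big(\mathrm{Bin}(k,\epsilon)\le\lfloor (k-d)/2\rfloor\big)$, which makes both the monotonicity of $B_k(\cdot)$ in $d$ and the one-step inequality $B_k(d)\le(1-\epsilon)B_{k-1}(d-1)+\epsilon B_{k-1}(d+1)$ transparent. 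With both branches established, applying the induction at $t=0$ to each initial state $z\in P_{init}^\mathcal{T}$, which satisfies the premise by \eqref{eq:th1}, gives $g^\mu(z)\ge Pr_{des}$; and since every episode $j$ restarts at such a $z$, the claimed bound holds for all $j\ge0$.
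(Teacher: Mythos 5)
Your reduction to reachability, the backward-induction framing, and the $Act(p^t)\neq\emptyset$ branch are all sound -- indeed that branch is handled more carefully than in the paper, since your one-step averaging over successors tolerates arbitrary later interleaving of the two selection modes. You have also located the crux exactly where it is. The genuine gap is the patch you propose for the fallback branch: the set $\{p^t: Act(p^t)=\emptyset\}$ is \emph{not} absorbing under $\pi^{\epsilon}_{GO}$. Pruning in Alg.~1 is driven by the \emph{worst-case} successor distance $d_{max}$ of each action relative to the remaining time, so a state is forced when every action has some unlikely but feasible bad outcome; after one \emph{intended} GO step the system is closer to $F_\FP^{\mathcal{T}}$, and nothing prevents the new state from having an action whose every outcome is benign, which then passes the test in line~12 and makes $Act$ nonempty again. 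Concretely, with $\epsilon=0.1$ and $Pr_{des}=0.7$: a state with $d^\epsilon=3$ and $5$ steps remaining is forced if every action has a feasible successor at distance $4$ (allowed by Assumption~\ref{assump:distance1}), since $B_4(4)=0.9^4=0.6561<0.7$ (your notation $B_k(d)$); yet its intended GO successor, at distance $2$ with $4$ steps remaining, is free as soon as it has one action with all outcomes at distance at most $3$, because $B_3(3)=0.9^3=0.729\geq 0.7$. So the algorithm does not ``commit'' to $\pi^{\epsilon}_{GO}$, Lemma~\ref{lemma} cannot be applied verbatim from the first fallback state, and your induction does not close.

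You should also know that the paper's own proof takes a different route that sidesteps, rather than resolves, this same obstacle. It partitions episodes by the \emph{last} time an action is drawn from $Act$ (Cases 1--3): a last-step $Act$-selection forces sure acceptance; otherwise the suffix after the last $Act$-selection is pure $\pi^{\epsilon}_{GO}$, and Lemma~\ref{lemma} is invoked on that suffix, seeded by the pruning guarantee $B_{T-t'-1}(d_{max})\geq Pr_{des}$. But membership in Case~2 or Case~3 is an event about the \emph{whole} trajectory (the suffix must never re-enter the free region), and the paper applies the \emph{unconditional} pure-GO bound of Lemma~\ref{lemma} to a \emph{conditional} probability; these need not compare favorably, which is precisely the ``lost recovery mass'' you diagnosed. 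In fact, chaining the effect above suggests the bound can genuinely fail under worst-case exploration: with $\epsilon=0.1$, $Pr_{des}=0.7$, $T=8$, let exploration at $t=0$ take a surviving action to distance-$5$ states ($B_7(5)=0.8503\geq0.7$, so it is not pruned), let the forced GO descent pass through a free state at distance $2$ with $4$ steps left, and let exploration there take the surviving action to distance $3$ ($B_3(3)=0.729$); propagating the worst-case values gives a satisfaction probability of about $0.691<0.7$ even though \eqref{eq:th1} holds at the initial state. So your instinct that this branch is ``the main obstacle'' is correct, but it cannot be repaired by the absorbing-region claim (which is false) nor by any invariant of the forms considered; it is a weakness of the theorem's justification itself, and closing it would require an extra structural assumption (e.g., that $Act(\cdot)=\emptyset$ is preserved along GO paths, or that pruning tests the exploration value rather than the pure-GO value).
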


\begin{proof}
There are in total $T$ actions to be taken in each episode of length $T$. Under Alg.~2, each action is taken either from the set $Act$ (line 7 of Alg.~2) or by following the policy $\pi^\epsilon_{GO}$ (line 5 of Alg.~2). Accordingly,  such sequences of $T$ actions can be grouped into three disjoint sets: 1) sequences such that the last action taken at $T-1$ is selected from the set $Act$, 2) sequences such that the action at $t$ is selected from the set $Act$, and all the following actions are taken according to the policy $\pi^\epsilon_{GO}$ for some $t \in \{0,1, \hdots, T-2\}$, 3) all the actions are taken according to the policy $\pi^\epsilon_{GO}$. These three sets are also illustrated in Fig.~\ref{fig:all_cases} as different cases. We will now show that the probability of constraint satisfaction is at least $Pr_{des}$ in each of these three cases. 
\begin{figure}[!h]
    \centering
\includegraphics[width=0.6\columnwidth, trim={0 0 0 1cm},clip]{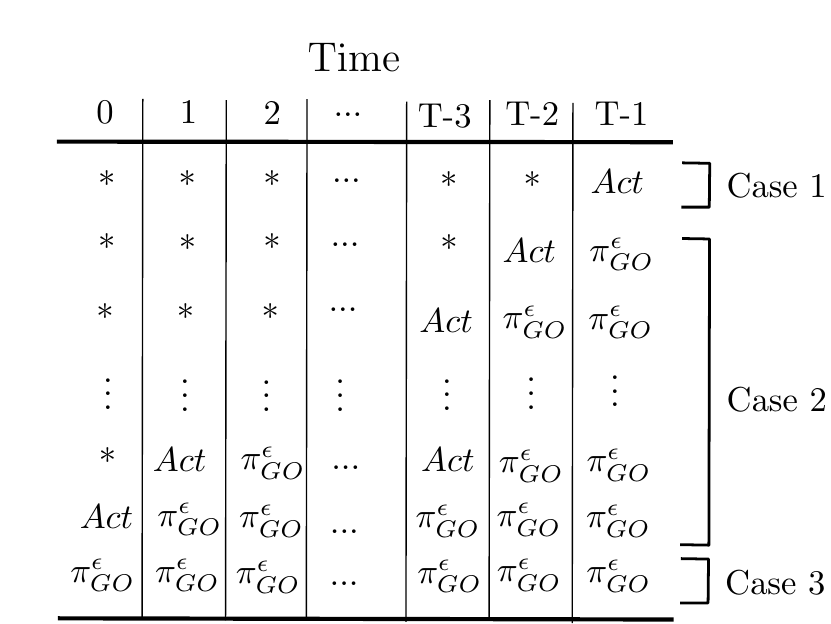}
    \caption{Possible sequences of actions generated in any episode via Alg.~2 are grouped into distinct sets based on whether the action at each time $t \in \{0,1, \hdots, T-1\}$ is taken from the set $Act$ (line 7 of Alg.~2) or by following the policy $\pi^\epsilon_{GO}$ (line 5 of Alg.~2). The symbol $*$ is used to allow for both possibilities, i.e., the action may be taken from $Act$ or via $\pi^\epsilon_{GO}$.}
    \label{fig:all_cases}
\end{figure}

\noindent\textbf{Case 1}: Consider an arbitrary action sequence such that the last action taken at $T-1$ is selected from the set $Act$ according to line 7 of Alg.~2. This means that the system's time-product MDP state at time $T-1$, $p^{T-1}$, was such that $Act(p^{T-1}) \neq \emptyset$. Note that $Act(p^{T-1})$ only contains actions for which the condition in line 12 of Alg.~1 is false (hence the pruning in line 13 is not executed). Accordingly, we will show that if the last action is taken from the set $Act(p^{T-1})$, then the system must reach an accepting state. We will prove this statement by contradiction: suppose that the last action $a \in Act(p^{T-1})$ but it results in a non-accepting state which has a positive finite distance $d^\prime$ to the set of accepting states. By definition, the maximum distance to the accepting states that can result from taking action $a$ at state  $p^{T-1}$, i.e., $d_{max}$ computed in line 9 of Alg.~1, satisfies $d_{max} \geq d^\prime>0$. Since the remaining episode time is $k=1$, $i_{max}$ in line 11 of Alg.~1 becomes a negative number for any $d_{max}>0$. If $i_{max}<0$, then the condition in line 12 is true and the action $a$ must have been pruned from $Act(p^{T-1})$, which is a contradiction. Consequently, whenever Alg.~2 generates an action sequence as in Case 1, the system is surely at an accepting state at the end of that episode, which implies the satisfaction of the constraint. 




\noindent\textbf{Case 2}: Consider an arbitrary action sequence such that for some $t^\prime \in \{0,1,2,\dots,T-2\}$, the action at time  $t^\prime$ is  selected  from the set $Act(p^{t'})$, and all the following actions (i.e., for all $t \in \{t^\prime+1,\dots,T-1\}$) are taken according to the policy $\pi^\epsilon_{GO}$.
Accordingly, taking any action $a^\prime \in Act(p^{t^\prime})$ at time $t'$ ensures that $\Pr\big(p^{t^\prime+1} \xrightarrow{T-t^\prime-1} F_\FP^{\mathcal{T}};\pi^{\epsilon}_{GO} \big) \geq Pr_{des}$, i.e., constantly following   $\pi^{\epsilon}_{GO}$ in the remaining time steps would drive the system to the set of accepting states with probability at least  $Pr_{des}$ (if this is not true, then line 12 would have been executed and $a^\prime$ would have been pruned). Overall, this means that the probability of generating an accepting output word starting with a prefix $p^0,\dots,p^{t^\prime}$ and then by purely following the policy $\pi^\epsilon_{GO}$ from $t^\prime+1$ to $T-1$ is greater than $Pr_{des}$. Consequently, whenever Alg.~2 generates an action sequence as in Case 2, the system reaches an accepting state by the end of that episode with probability at least $Pr_{des}$. 

\noindent\textbf{Case 3}: In this case, each action is taken according to the policy $\pi^\epsilon_{GO}$ for all $t=\{0,1,\dots,T-1\}$. In light of Lemma~\ref{lemma}, if \eqref{eq:th1} is true, then $\Pr\big(p^0 \xrightarrow{T} F_\FP^{\mathcal{T}};\pi^{\epsilon}_{GO} \big) \geq Pr_{des}$ for all $p^0 \in S_\mathcal{P}^\mathcal{T}$, which means that the probability of reaching an accepting state (or satisfying the constraint) by purely following the policy $\pi^\epsilon_{GO}$ is at least $Pr_{des}$.

 Cases 1, 2, 3 are disjoint and cover all the possible outcomes under Alg.~2. Since the probability of constraint satisfaction in each case is at least $Pr_{des}$, we conclude that the probability of constraint satisfaction is at least $Pr_{des}$ in each episode, i.e., $\Pr\big(\mathbf{o}(jT),\mathbf{o}(jT+1),\dots,\mathbf{o}(jT+T) \models \phi{(\tau_j)}\big) \geq Pr_{des}$ for all $j \geq 0$, where $\mathbf{o}(jT),\mathbf{o}(jT+1),\dots,\mathbf{o}(jT+T)$ is the output word in episode $j$.
\end{proof}
\begin{remark} [Optimality]
Given a finite MDP using $Q$-learning, if each action is repetitively implemented in each state for infinite number of times and the learning rate $\alpha$ decays appropriately, then the $Q$-values converge to the optimal $Q$-values with probability $1$ \cite{watkins1992}. In this paper, we use $Q$-learning over a finite time-product-MDP (the update rule is in line 10 in Alg.~2), and the $Q$-values of the time-product-MDP states will converge to the optimal $Q$-values if line 10 in Alg.~2 is updated infinitely many times.
\end{remark}

\begin{figure*}[ht]
    \centering
    \hspace*{-4cm}
    \subfigure[]{\includegraphics[width=0.16\textwidth]{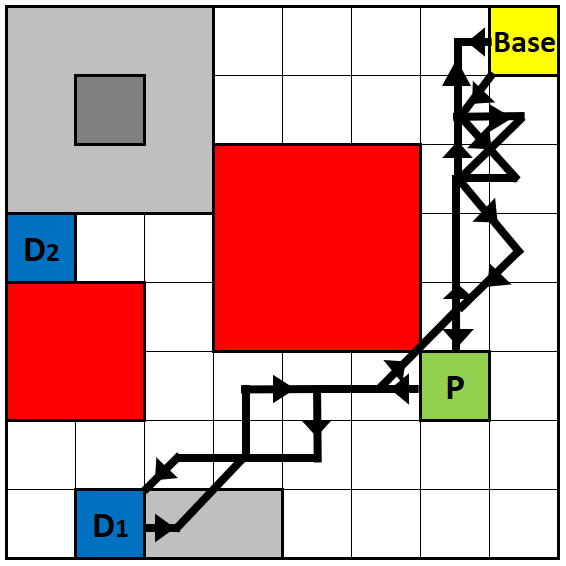}}
    \hspace{1cm}
    \subfigure[]{\includegraphics[width=0.16\textwidth]{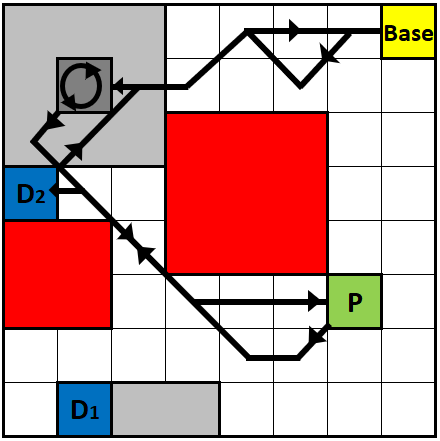}}
    \hspace{1cm}
    \subfigure[]{\includegraphics[width=0.16\textwidth]{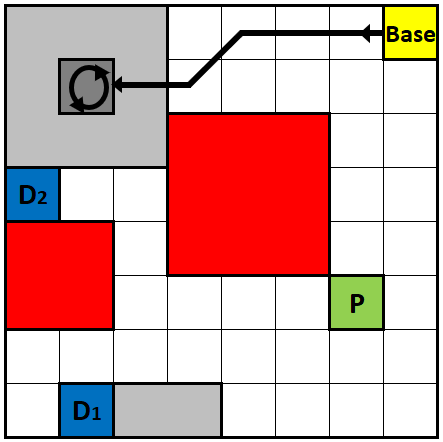}} 
    \hspace{1cm}
    \subfigure[]{\includegraphics[width=0.16\textwidth]{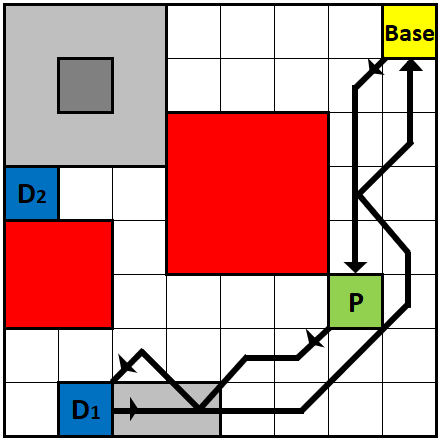}}
    \hspace*{-4cm}
    \caption{A grid environment where  yellow, green, blue, red cells  respectively represent the base station, the pick-up region, the delivery regions, and the obstacles. The gray cells (darker shade indicates a higher reward) are the regions in which monitoring is rewarded. The black arrows denote some sample trajectories, which are learned in three cases with different goals: (a) proposed algorithm after the first episode, (b) proposed algorithm at the end of the learning, (c) only maximizing the expected reward, and (d) only satisfying the TWTL specification.}
    \label{fig:sim_benchmarks}
\end{figure*}


\section{Simulation Results}
In this section, we present some case studies implemented on Python 2.7, and performed on a PC with an Intel i7-7700HQ CPU at 2.8 GHz processor and 16.0 GB RAM. In these case studies, we consider an agent moving over an $8 \times 8$ grid as shown in Fig.~\ref{fig:sim_benchmarks} and selecting actions from the set $A = \{N, NE, E, SE, S, SW, W, NW, Stay\}$. Under these actions, the agent can maintain its current position or move to any of the feasible neighboring cells (including those in the ordinal directions). 
 Each action leads to the intended transition with probability $0.9$ or to one of the other feasible transitions (selected uniformly at random) with probability $0.1$. For example, if the agent takes the action $NE$, it moves to the neighboring cell in the direction $NE$ with probability 
$0.9$ or, with probability  $0.1$, it stays at its location or moves to a feasible neighboring cell in any of the other 7 directions. 


We consider a scenario where the agent is required to periodically perform a pickup and delivery task while maximizing situational awareness by collecting measurements from the environment.
 Accordingly, the reward $r_t$  represents the value of monitoring the agent's current position on the grid and the discount factor in \eqref{pistareq} is selected as  $\gamma = 0.95$.
In Fig.~\ref{fig:sim_benchmarks}, the light gray cells, the dark gray cell, and all the other cells yield a reward of $1$, $10$, and $0$, respectively. The pickup and delivery task is encoded as a TWTL constraint: $[H^1 P]^{[0,20]} . ([H^1 D_1]^{[0,20]} \; \vee \; [H^1 D_2]^{[0,20]}) . [H^1 Base]^{[0,20]}$, which means that \textit{"go to the pickup location $P$ and stay there for $1$ time step in the first $20$ time steps and immediately after that go to one of the delivery locations, $D_1$ or  $D_2$, and stay there for $1$ time step within $20$ time steps, and immediately after that go to $Base$ and stay there for $1$ time step within $20$ time steps."}. Based on the time bound of this TWTL specification, the length of each episode is selected as $62$ time steps.


We present our simulation results under five cases. The first three cases are performed to compare the performance under three different behaviors: 1) maximizing the expected reward under the TWTL constraint (proposed approach), 2) maximizing the expected reward without any constraint, and 3) learning to satisfy the TWTL formula. 
The fourth case is performed to demonstrate how the performance under the proposed approach changes when the desired probability of constraint satisfaction, $Pr_{des}$, increases or the probabilities of likely transitions are underestimated, i.e., Alg.~2 is executed in this scenario with some $\epsilon>0.1$. Finally, the fifth case is performed to demonstrate the scalability of the proposed algorithm under varying sizes of the time-product MDP.

 {In Case 1, $Pr_{des}$ is chosen as $0.7$ and Alg.~2 is executed by using $\epsilon=0.1$ for $400000$ episodes. The TWTL constraint is satisfied in $341060$ episodes implying a success ratio $0.853 > Pr_{des} = 0.7$. The run time of the algorithm is $537.3 $ seconds. We show a sample trajectory after the first episode and $400000$ episodes in Fig.~\ref{fig:sim_benchmarks} (a) and (b), respectively. Note that the agent satisfies the constraint by delivering to $D_1$ in the first episode. However, the agent eventually learns that it can collect more rewards by delivering to $D_2$ instead of $D_1$.}

\begin{table*}[t!]
  \centering
  \resizebox{\textwidth}{!}{
  \begin{tabular}{|c |c |c |c |c |c |c |c |c |c |c|}
    \hline
  $(\epsilon_{est}, Pr_{des})$& (0.1, 0.5) & (0.1, 0.6) & (0.1, 0.7) &  (0.15, 0.5) & (0.15, 0.6) & (0.15, 0.7) & (0.2, 0.5) & (0.2, 0.6) & (0.2, 0.7)  \\ \hline
    Success Ratio [\%]& 54.1& 69.0& 83.9& 75.8& 90.2 &95.5 &93.6 &98.1 &99.0
 \\ \hline
    Avg. Rewards at the Last 5000 Episodes & 205.8
 & 195.9 & 193.9& 175.4 &176.7 &155.5 &141.6 &144.1 &130.6
 \\ \hline
  \end{tabular}}
  \caption{Simulation results for the task $[H^1 P]^{[0,20]} . ([H^1 D_1]^{[0,20]} \; \vee \; [H^1 D_2]^{[0,20]}) . [H^1 Base]^{[0,20]}$ and the real action uncertainty of $\epsilon_{real}=0.1$}\label{tab_all}
\end{table*}

In Case 2, we remove the TWTL constraint and demonstrate the performance when the agent follows the standard Q-learning. This case is simulated for $15 000$ episodes with a run time of $5$ seconds. As shown in Fig.~\ref{fig:sim_benchmarks}(c), the agent simply learns to quickly go to the highest reward zone.

In Case 3, we use the standard Q-learning to learn satisfying the desired TWTL task used in Case 1, without any consideration of the monitoring performance. To accomplish this, we assign rewards to the accepting states of the time-product MDP (the reward at any other state is zero). While the agent eventually learns to satisfy the TWTL constraint by following a shortest path as shown in Fig.~\ref{fig:sim_benchmarks}(d), there is no guarantee on the constraint satisfaction in the early stages of learning. In this case, the constraint is satisfied in only $143 312$ episodes, which is $35.8\% $ of the total number of episodes. The total run time for this case is $199.8$ seconds.

In Case 4, we investigate how the parameters $\epsilon$ and $Pr_{des}$ influence the performance. To this end, the proposed algorithm is executed under varying values of $Pr_{des}=0.5, 0.6, 0.7$ and $\epsilon = 0.1, 0.15, 0.2$. In this case, $\epsilon>0.1$ indicates a conservative estimation of uncertainty. For example, for $\epsilon = 0.2$, the proposed algorithm assumes that the agent can move in its intended direction with a probability at least $0.8$ whereas such transitions actually occur with probability 0.9. The results for Case 4 are depicted in Table~\ref{tab_all}. For a fixed $Pr_{des}$, we observe that increasing $\epsilon$ reduces collected reward and increases the probability of constraint satisfaction. This is due to the fact that overestimating the uncertainty makes the algorithm overly cautious in exploration and more inclined to first satisfy the TWTL constraint. Moreover, for a fixed $\epsilon$, increasing $Pr_{des}$ reduces the collected reward since the constraint becomes more restrictive.    

In Case 5, we consider the same structure of the TWTL task but with different time windows, i.e., we create three scenarios considering the TWTL specification $[H^1 P]^{[0,k]} . ([H^1 D_1]^{[0,k]} \; \vee \; [H^1 D_2]^{[0,k]}) . [H^1 Base]^{[0,k]}$ with $k=20, 30, 40$). In all scenarios, the product-MDPs have $154$ states since the structure of the relaxed automaton is independent from the time windows. However, the sizes of the time-product MDPs vary since the lengths of episodes are determined by the time windows ($62, 92, 122$ time steps). The time-product MDPs have $9548, 14168, 18788$ states for $k= 20, 30, 40$, respectively, and their offline construction took approximately $3.44, 8.65, 22.66$ seconds.  The algorithm parameters are selected as $\epsilon=0.15$ and $Pr_{des}=0.7$. The success ratios in these three scenarios are observed as $0.95, 0.97, 0.97$ which are all greater than $Pr_{des} = 0.85$. This difference between the success ratios and $Pr_{des}$ is mainly due to the overestimation of uncertainty, i.e., $\epsilon=0.15>0.1$. The average rewards in the last 5000 episodes are observed as $155.04, 361.99, 582.39$, respectively. As the time window of the TWTL task (hence the episode length) increases, the agent has more time in each episode to explore the environment so the collected reward increases.

\section{Conclusion}
This paper proposes a constrained reinforcement learning algorithm for maximizing the expected sum of rewards in a Markov Decision Process (MDP) while satisfying  a bounded temporal logic constraint in each episode with a desired probability. We represent the bounded temporal logic constraint as a finite state automaton. We then construct a time-product MDP and formulate a constrained reinforcement learning problem. We derive a lower bound on the probability of satisfying the constraint from each state of the time-product MDP in the remaining episode time. This lower bound is computed by using some limited knowledge on which transitions are sufficiently likely in the system. The proposed approach uses this lower bound to keep the probability of constraint satisfaction above the desired threshold by restricting the actions that can be taken during learning. 

As a future direction, we plan to 1) explore how similar guarantees on the probability of satisfaction of temporal logic constraints during learning can be achieved by multi-agent systems in a distributed manner (e.g., \cite{Yazicioglu17TCNS,Bhat19,Peterson20}), and 2) extend our methods to dynamic environments, where the constraint satisfaction requires reaching the accepting states on a time-varying graph (time-product MDP) (e.g., \cite{yaziciouglu2020}).

\color{black}
\bibliography{ref}

\end{document}